\documentclass[10pt,twocolumn,letterpaper]{article}

\usepackage{wacv}
\usepackage{times}
\usepackage{epsfig}
\usepackage{graphicx}
\usepackage{amsmath, bm}
\usepackage{amssymb}

\usepackage{hyperref}
\usepackage{url}
\usepackage{booktabs}
\usepackage{amsfonts}
\usepackage{nicefrac}
\usepackage{microtype}
\hypersetup{colorlinks=true, linkcolor=red}
\usepackage{pgfplots}
\usepackage{pgfplotstable}
\usepackage{filecontents,pgfplots}
\usepackage{tikz}
\usetikzlibrary{arrows,calc,shapes,snakes,positioning,matrix,arrows,decorations.pathmorphing,decorations.text}
\usepackage{mathrsfs}
\usepackage{soul}
\usepackage{multirow}
\usepackage{comment}
\usepackage{rotating}
\usepackage{mwe}
\PassOptionsToPackage{bookmarks=false}{hyperref}

\usepackage[skip=1ex]{caption}
\usepackage{array, booktabs, makecell, multirow}
\setcellgapes{5pt}
\setlength\belowrulesep{0pt}
\setlength\aboverulesep{0pt}
\usepackage{hyperref}

\newcommand{\RR}{\mathcal{R}}

\usepackage{color}
\usepackage[utf8]{inputenc} % allow utf-8 input
\usepackage[T1]{fontenc}    % use 8-bit T1 fonts
\usepackage{booktabs}       % professional-quality tables
\usepackage{amsfonts}       % blackboard math symbols
\usepackage{nicefrac}       % compact symbols for 1/2, etc.
\usepackage{microtype}      % microtypography

\usepackage{pgfplots}
\usepackage{pgfplotstable}
\usepackage{filecontents,pgfplots}
\usepackage{tikz}
\usetikzlibrary{arrows,calc,shapes,snakes,positioning,matrix,arrows,decorations.pathmorphing,decorations.text}
\usepackage{sci}

\usepackage{url}
\usepackage{graphicx}
\usepackage{tikz}
\usepackage{pgfplots}

\usepackage{sci}

\usepackage{amsmath,amssymb} % define this before the line numbering.
\usepackage{color}
\usepackage{amsmath}
\usepackage{amssymb}
\usepackage{amsthm}
\usepackage{algorithm}
\usepackage{algorithmicx}
\usepackage{algpseudocode}
\usepackage{algpascal}

\usepackage{mathrsfs}
\usepackage{soul}
\usepackage{multirow}
\usepackage{comment}
\usepackage{xfrac}
\usepackage{rotating}
\usepackage{hhline}
\usepackage{mathtools}
\usepackage{wrapfig}
\usepackage{upgreek}
\usepackage{tikz}
\usepackage{caption}
\usepackage{booktabs}
\usepackage{subfigure}
\usepackage{lipsum}
\hypersetup{breaklinks=true}
\urlstyle{same}
\usepackage{dblfloatfix}

\newcommand{\tfe}{\boldsymbol{\uptheta}_{fe}}
\newcommand{\tc}{\boldsymbol{\uptheta}_{c}}
\newcommand{\tbp}{\boldsymbol{\uptheta}_{bp}}

\newcommand{\FE}{\mathbb{F}\mathbb{E}}
\newcommand{\CC}{\mathbb{C}}
\newcommand{\BP}{\mathbb{B}\mathbb{P}}

\newtheorem*{remark}{Remark}

% Include other packages here, before hyperref.

%%%%%%%%%%%%%%%%%%%%%%%%%%%%%%%%%%%%%%%%%%%%%%%%%%%%%%%%%%%%%%%%%%%%%%%%%%%%%%%%
%
%%% IMPORTANT - These next three lines are crucial.
%               (1) PLEASE enter your paper ID (given by CMT) replacing the
%                   '****' right below here with the ID from CMT.
%               (2) Leave the \wacvfinacopy commented out for the submission
%                   version, but UNCOMMENT it for your CAMERA-READY upload.
%               (3) For the camera-ready version, you may be asked to set a
%                   starting page number.  If so, replace the '9876' below with
%                   the starting page number assigned by the publication chair.
 
%(1)
 % Enter the WACV Paper ID here

%(2)
\wacvfinalcopy % *** Uncomment this line for the final submission

%(3)
\ifwacvfinal
\def\assignedStartPage{9876} % *** Enter the assigned starting page number (instead of 9876)
\fi
\pagenumbering{gobble}
%%%%%%%%%%%%%%%%%%%%%%%%%%%%%%%%%%%%%%%%%%%%%%%%%%%%%%%%%%%%%%%%%%%%%%%%%%%%%%%%

% If you comment hyperref and then uncomment it, you should delete
% egpaper.aux before re-running latex.  (Or just hit 'q' on the first latex
% run, let it finish, and you should be clear).
% \ifwacvfinal
% \usepackage[breaklinks=true,bookmarks=false]{hyperref}
% \else
% \usepackage[pagebackref=true,breaklinks=true,colorlinks,bookmarks=false]{hyperref}
% \fi

% Pages are numbered in submission mode, and unnumbered in camera-ready
\ifwacvfinal
\setcounter{page}{\assignedStartPage}
\else
\pagestyle{empty}
\fi

\begin{document}

%%%%%%%%% TITLE
\title{Representation Learning with Statistical Independence to Mitigate Bias}

\author{Ehsan Adeli$^{1,2}$\thanks{Equal Contribution}, Qingyu Zhao$^{1*}$, Adolf Pfefferbaum$^{1,3}$, Edith V. Sullivan$^{1}$, \\
Li Fei-Fei$^{2}$, Juan Carlos Niebles$^{2}$, Kilian M. Pohl$^{1,3}$\\
$^1$Department of Psychiatry and Behavioral Sciences, Stanford University, CA 94305 \\
  $^2$Department of Computer Science, Stanford University, CA 94305\\
  $^3$Center for Biomedical Sciences, SRI International, Menlo Park, CA 94205\\
{\tt\small \{eadeli,\,qingyuz,\,dolfp,\,edie,\,feifeili,\,jniebles,\,kilian.pohl\}@stanford.edu}
% For a paper whose authors are all at the same institution,
% omit the following lines up until the closing ``}''.
% Additional authors and addresses can be added with ``\and'',
% just like the second author.
% To save space, use either the email address or home page, not both
% \and
% Second Author\\
% Institution2\\
% First line of institution2 address\\
% {\tt\small secondauthor@i2.org}
}

% \author{%
%   Ehsan Adeli$^{1,2}$\thanks{Equal Contribution} 
%   \And
%   Qingyu Zhao$^{1*}$
%   \And
%   Adolf Pfefferbaum$^{1,3}$
%   \And
%   Edith V. Sullivan$^{1}$
%   \And
%   Li Fei-Fei$^{2}$
%   \And
%   Juan Carlos Niebles$^{2}$
%   \And
%   Kilian M. Pohl$^{1,3}$
%   \And\\
%   $^1$Department of Psychiatry and Behavioral Sciences, Stanford University, CA 94305 \\
%   $^2$Department of Computer Science, Stanford University, CA 94305\\
%   $^3$Center for Biomedical Sciences, SRI International, Menlo Park, CA 94205\\
%   \\
%   \texttt{\small \{eadeli,\,qingyuz,\,dolfp,\,edie,\,feifeili,\,jniebles,\,kilian.pohl\}@stanford.edu} \\
% }

\maketitle
%\thispagestyle{empty}

%%%%%%%%% ABSTRACT
\begin{abstract}
Presence of bias (in datasets or tasks) is inarguably one of the most critical challenges in machine learning applications that has alluded to pivotal debates in recent years. Such challenges range from spurious associations between variables in medical studies to the bias of race in gender or face recognition systems. Controlling for all types of biases in the dataset curation stage is cumbersome and sometimes impossible. The alternative is to use the available data and build models incorporating fair representation learning. In this paper, we propose such a model based on adversarial training with two competing objectives to learn features that have (1) maximum discriminative power with respect to the task and (2) minimal statistical mean dependence with the protected (bias) variable(s). Our approach does so by incorporating a new adversarial loss function that encourages a vanished correlation between the bias and the learned features. We apply our method to synthetic data, medical images (containing task bias), and a dataset for gender classification (containing dataset bias). Our results show that the learned features by our method not only result in superior prediction performance but also are unbiased. The code is available at {\small{\url{https://github.com/QingyuZhao/BR-Net/}}}. 
%{\small{\url{http://blinded_for_review/}}}. 
\end{abstract}

%%%%%%%%% BODY TEXT
\section{Introduction}

%A central challenge in practically all medical applications as well as sensitive everyday applications of machine learning is the consideration of confounding bias in the study. Confounders are extraneous variables that distort the relationship between the input (independent) and output (dependent) variables and hence lead to erroneous conclusions \cite{pourhoseingholi2012control}. For instance, for disease diagnosis prediction from medical images or face recognition, images are considered as input variables, based on which a model is built to predict output label(s) or scalar value(s). In such studies, demographic variables (such as age, sex, race, \etc) may confound the study if the distribution of the data is skewed with respect to any variable, and hence may bias the training process in favor of one sub-cohort or the other. When not properly controlled, the predictor may learn the influence of the confounder and bias present in the data instead of the actual discriminative cues.
A central challenge in practically all machine learning applications is how to identify and mitigate the effects of the bias present in the study. Bias can be defined as one or a set of extraneous protected variables that distort the relationship between the input (independent) and output (dependent) variables and hence lead to erroneous conclusions \cite{pourhoseingholi2012control}. In a variety of applications ranging from disease prediction to face recognition, machine learning models are built to predict labels from images. Variables such as age, sex, and race may influence the training if the labels distribution is skewed with respect to them. Hence, the model may learn bias effects instead of actual discriminative cues.

The two most prevalent types of biases are dataset bias \cite{salimi2019interventional,khosla2012undoing} and task bias \cite{lee2019algorithmic,huang2012biased}. \textit{Dataset bias} is often introduced due to the lack of enough data points spanning the whole spectrum of variations with respect to one or a set of protected variables (\ie, variables that define the bias). For example, a model that predicts gender from face images may have different recognition capabilities for different races with uneven sizes of training samples \cite{buolamwini2018gender}. %; A biased model would have different recognition capabilities for people with different races.  
\textit{Task bias}, on the other hand, is introduced by the intrinsic dependency between protected variables and the task. For instance, in neuroimaging applications, demographic variables such as gender \cite{eichler1992gender} or age \cite{dobrowolska2019age} are crucial protected variables; \ie,  they affect both the input (\eg, neuroimages) and output (\eg, diagnosis) of a prediction model so they likely introduce a distorted association. Both bias types pose serious challenges to learning algorithms.

With the rapid development of deep learning methods, Convolutional Neural Networks (CNN) are emerging as eminent ways of extracting representations (features) from imaging data. However, like other machine learning methods, CNNs are prone to capturing any bias present in the task or dataset when not properly controlled. Recent work has focused on methods for understanding causal effects of bias on databases  \cite{salimi2019interventional,khademi2019algorithmic} or learning fair models \cite{li2019repair,tommasi2017deeper,khosla2012undoing,salimi2019interventional} with de-biased representations based on the developments in invariant feature learning \cite{ganin2016domain,bechavod2017learning} and domain adversarial learning \cite{elazargoldberg2018adversarial,sadeghi2019global}. These methods have shown great potential when the protected variables are dichotomous or categorical. However, their applications to handling task bias and continuous protected variables are still under-explored.

\begin{figure*}[t]
    \centering
    %\vspace{-5pt}
    \includegraphics[width=0.98\textwidth]{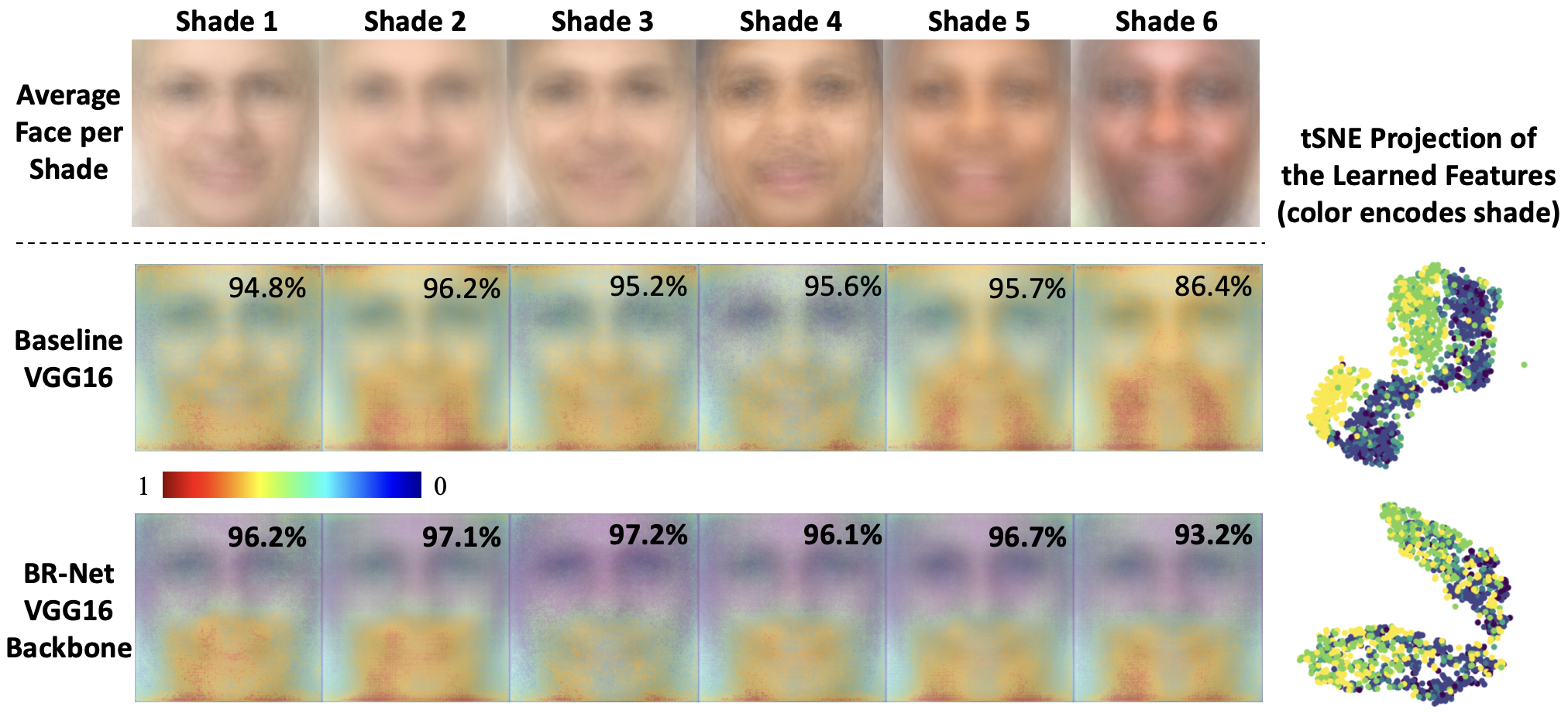}\vspace{-5pt}
    \caption{Average face images for each shade category ($1^{st}$ row), average saliency map of the trained baseline ($2^{nd}$ row), and BR-Net ($3^{rd}$ row) color-coded with the normalized saliency for each pixel. BR-Net results in more stable patterns across all 6 shades. The last column shows the tSNE projection of the learned representations by each method. Our method results in a better representation space invariant to the bias variable (shade) while the baseline shows a pattern influenced by the bias. Average accuracy of per-shade gender classification over 5 runs of 5-fold cross-validation (pre-trained on ImageNet, fine-tuned on GS-PPB) is shown on each average map. %BR-Net results in better accuracy scores in all categories, especially the last one. 
    %The models are pre-trained on ImageNet and fine-tuned on GS-PPB. 
    BR-Net not only obtains better accuracy for the darker shade but also \textit{regularizes} the model to improve overall per-category accuracy. }
    \label{fig:gs_viz}
\end{figure*}

% we are essentially different from state-of-the-art learning methods are 1) we can handle more than categorical variables; 2) we theoretical prove the statistical independence of the adversarial training strategy

In this paper, we propose a representation learning scheme that learns features predictive of class labels with minimal bias to any generic type of protected variables. Our method is inspired by the domain-adversarial training approaches \cite{ganin2016domain} with controllable invariance \cite{xie2017controllable} within the context of generative adversarial networks (GANs) \cite{goodfellow2014generative}. We introduce an adversarial loss function based on the Pearson correlation between true and predicted values of a protected variable. Unlike prior methods, this strategy can handle protected variables that are continuous or ordinal. We theoretically show that the adversarial minimization of the linear correlation can remove non-linear association between the learned representations and protected variables, thus achieving \textit{statistical mean independence}. Further, our strategy improves over the commonly used cross-entropy or mean-squared error (MSE) loss that is often \textit{ill-posed} when optimized adversarially. Our method, denoted by Bias-Resilient Neural Network (BR-Net), uses architectures similar to the prior adversarial invariant feature learning works, but injects resilience towards the bias during training by taking an statistical independence approach to produce bias-invariant features at the presence of dataset and task biases. BR-Net is novel compared to the prior fair representation learning methods as (1) it can deal with \textit{continuous} and \textit{ordinal} protected variables and (2) is based on a  theoretical proof of mean independence within the adversarial training context.

We evaluate BR-Net on two datasets that allow us to highlight different aspects of the method in comparison with a wide range of baselines. First, we test it on a \textit{medical imaging application}, \ie, distinguishing T1-weighted Magnetic Resonance Images (MRIs) of patients with the human immunodeficiency virus (HIV) from those of healthy subjects. As documented by the HIV literature, HIV accelerates the aging process of the brain \cite{cole2017increased}, thereby introducing a task bias with respect to age (a continuous variable). In other words, if a predictor is trained not considering age as a protected variable (or confounder as referred to in medical studies), the predictor may actually learn the brain aging patterns rather than actual HIV markers. Then, we evaluate BR-Net for \textit{gender classification} using the Gender Shades Pilot Parliaments Benchmark (GS-PPB) dataset \cite{buolamwini2018gender}. We use different backbones pre-trained on ImageNet \cite{deng2009imagenet} in BR-Net and fine-tune them for our specific task, \ie, gender prediction from face images. We show that prediction accuracy of the vanilla model is dependent on the subject's skin color (quantified by the `shade' variable, an ordinal variable), which is not the case for BR-Net. Our comparison with several baselines  and prior state-of-the-art %based on multi-task \cite{lu2017fully} prediction (\ie, predicting gender and shade as two tasks) and categorical GAN \cite{springenberg2015unsupervised} (\ie, predicting shade as a categorical variable in the adversarial component) 
shows that BR-Net not only learns features impartial to race (verified by feature embedding visualizations) but also results in higher accuracy (Fig. \ref{fig:gs_viz}). 

\section{Related Work}
\noindent\textbf{Fairness in Machine Learning:} In recent years, developing fair machine learning models have been the center of many discussions \cite{liu2018delayed,hashimoto2018fairness,barocas2017fairness} including the media \cite{Dhruv2019,Miller2015}. It is often argued that human or society biases are replicated in the training datasets and hence can be seen in learned models \cite{barocas2016big}. Recent effort in solving this problem focused on building fairer datasets \cite{yang2019towards,celis2016fair,salimi2019interventional}. However, this approach is not always practical for large-scale datasets or for applications, where data is relatively scarce and expensive to generate (\eg, medical applications). Other works learn fair representations leveraging the existing data \cite{zemel2013learning,creager2019flexibly,weinzaepfel2019mimetics} by identifying features that are only predictive of the actual outputs, \ie, impartial to the protected variable. But they come short when the protected variables are continuous. 

\vspace{5pt}\noindent\textbf{Domain-Adversarial Training:} \cite{ganin2016domain} proposed for the first time to use adversarial training for domain adaptation tasks by using the learned features to predict the domain label (a binary variable; source or target). Several other works built on top of the same idea and explored different loss functions \cite{bousmalis2017unsupervised}, domain discriminator settings  \cite{tzeng2017adversarial,Edwards16,beutel2017data}, or cycle-consistency \cite{hoffman2017cycada}. The focus of all these works was to close the domain gap, which is often encoded as a binary variable. To learn general-purpose bias-resilient models, we need new theoretical insight into the methods that can learn features invariant to all types of protected variables. 

\vspace{5pt}\noindent\textbf{Invariant Representation Learning:} There have been different attempts in the literature for learning representations that are invariant to specific factors in the data. For instance, \cite{zemel2013learning} took an information obfuscation approach to obfuscate membership in the protected group of data during training, and \cite{bechavod2017learning,ranzato2007unsupervised} introduced regularization-based methods. Recently, \cite{xie2017controllable,akuzawa2019adversarial,zhang2018mitigating,elazargoldberg2018adversarial,choi2019can} proposed to use domain-adversarial training strategies for invariant feature learning. Some works \cite{sadeghi2019global,wang2019balanced} used adversarial techniques based on similar loss functions as in domain adaptation to predict the exact values of the protected variables. For instance, \cite{wang2019balanced} used a binary cross-entropy for removing effect of `gender' and \cite{sadeghi2019global} used linear and kernelized least-square predictors as the adversarial component. Several methods based on optimizing equalized odds \cite{madras2018learning}, entropy \cite{springenberg2015unsupervised,roy2019} and mutual-information  \cite{song2019,bertran19a,moyer2018} were also widely used for fair representation learning. However, these methods are intractable for continuous or ordinal protected variables. %Furthermore, they do not offer any solutions for continuous variables. %$\ell_2$ or MSE loss functions are \textbf{not} appropriate to use in the discriminator, since back-propagating it in adversial manner would imply that we need to maximize $\ell_2$, which is meaningless and leads to inconsistent or diverging solutions. 

%Instead, we introduce a loss function based on correlation coefficient to naturally alleviate the bias effects on the learned features.

%\textbf{Distribution Matching:} Some previous work attempted to learn distributionally robust techniques to avoid learning biased effects from data \cite{Oren2019distri}. This was done by matching the distributions of the data \cite{cao2018unsupervised,baktashmotlagh2016distribution} across different domains with respect to the protected variable. However, they only model data populations as a whole and fall short when it is crucial to remove the association between the learned features and a specific protected variable for each input data point. Whereas, to close the gap with respect to the underlying bias in the data, our correlation-based analysis minimizes the statistical bias-predictive power of the learned features for every individual data point, which by construction harmonizes the data distribution on the population level.

\section{Bias-Resilient Neural Network (BR-Net)}
Suppose we have an $M$-class classification problem, for which we have $N$ pairs of training images and their corresponding target label(s): $\{(\X_i, \y_i)\}_{i=1}^N$. Assuming a set of $k$ protected variables, denoted by a vector $\b \in \RR^k$, to train a model for classifying each image while being impartial to the protected variables, we propose an \textit{end-to-end} architecture (Fig.~\ref{fig:arch}) similar to domain-adversarial training approaches \cite{ganin2016domain}. Given the input image $\X$, the representation learning ($\FE$) module extracts a feature vector $\F$, on top of which a Classifier ($\CC$) predicts the class label $\y$. % for the input $\X$, and it forces $\FE$ to learn discriminative futures for the classification task. 
Now, to guarantee that these features are not biased to $\b$, %(\ie they are invariant to the confounding variables),
we build a network (denoted by $\BP$) with a new loss function that checks the statistical mean dependence of the protected variables to $\F$. Back-propagating this loss to $\FE$ adversarially results in features that minimize the classification loss while having the least statistical dependence on the protected variables. 

% \begin{figure*}[t]
%     \centering
%     % \begin{minipage}{.8\textwidth}
%     % \centering
%     \includegraphics[width=0.50\linewidth]{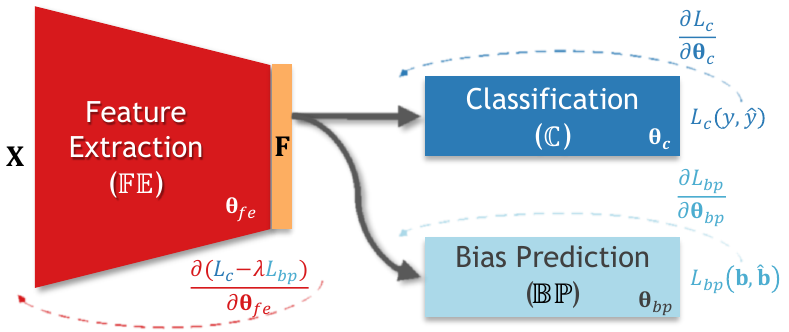} 
%     \includegraphics[width=0.45\linewidth]{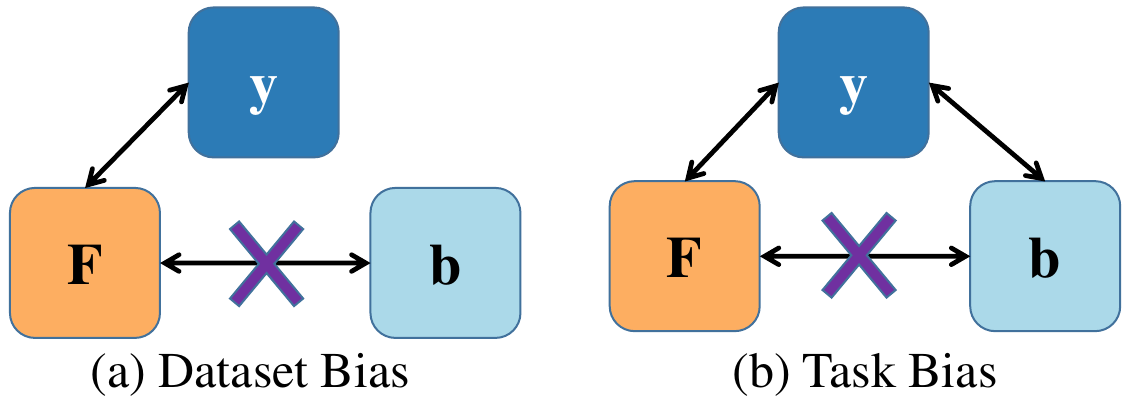}
%     \caption{BR-Net architecture (left): %: Feature Extraction ($\FE$), Classification ($\CC$), and Confounder Prediction ($\CF$). 
%     $\FE$ learns features, $\F$, that successfully classify ($\CC$) the input while being invariant (statistically independent) to the protected variables, $\b$, using $\BP$ and the adversarial loss, $-\lambda L_{bp}$ (based on correlation coefficient). Forward arrows show forward paths while the backward dashed ones indicate back-propagation with the respective gradient ($\partial$) values. Right: BR-Net can remove direct dependency between $\F$ and $\b$ for both dataset or task bias. }
%     \label{fig:arch}
%     % \end{minipage} ~~
% \end{figure*}
\begin{figure}[t]
    \centering
    \includegraphics[width=\linewidth]{} 
    \caption{BR-Net architecture: %: Feature Extraction ($\FE$), Classification ($\CC$), and Confounder Prediction ($\CF$). 
    $\FE$ learns features, $\F$, that successfully classify ($\CC$) the input while being invariant (statistically independent) to the protected variables, $\b$, using $\BP$ and the adversarial loss, $-\lambda L_{bp}$ (based on correlation coefficient). Forward arrows show forward paths while the backward dashed ones indicate back-propagation with the respective gradient ($\partial$) values.}
    \label{fig:arch}
\end{figure}
\begin{figure}[t]
    \centering
    \includegraphics[width=\linewidth]{}
    \caption{BR-Net can remove direct dependency between $\F$ and $\b$ for both dataset or task bias. }
    \label{fig:arch2}
\end{figure}

Each network has its underlying trainable parameters, defined as $\tfe$ for $\FE$, $\tc$ for $\CC$, and $\tbp$ for $\BP$.  If the predicted probability that subject $i$ belongs to class $m$ is defined by $\hat{y}_{im} = \CC(\FE(\X_i; \tfe); \tc)$, the classification loss can be characterized by a cross-entropy:
\begin{equation}
    L_{c} (\X, \y; \tfe, \tc) 
    %L_{c} 
    = - \sum_{i=1}^N \sum_{m=1}^M y_{im} \log(\hat{y}_{im}).
\end{equation}
Similarly, with $\hat{\b}_i=\BP(\FE(\X_i; \tfe); \tbp)$, we can define the adversarial component of the loss function. Standard methods for designing this loss function suggest to use a cross-entropy for binary/categorical variables (\eg, in \cite{ganin2016domain,xie2017controllable,choi2019can}) or an $\ell_2$ MSE loss for continuous variables (\cite{sadeghi2019global}). These loss functions solely aim to maximize prediction error of $\b$ in the adversarial training but cannot remove statistical dependence. For example, the maximization of MSE between $\hat{\b}$ and $\b$ is an ill-posed (unbounded) objective and can be trivially achieved by uniformly shifting the magnitude of $\hat{\b}$, which does not remove any correlation with respect to $\b$. To avoid this issue, we define the surrogate loss for predicting the protected variables while quantifying the statistical dependence with respect to $\b$ based on the 
squared Pearson correlation $\text{corr}^2(\cdot,\cdot)$:
\begin{equation} \label{eq:corr}
    L_{bp} (\X, \b; \tfe, \tbp) = -\sum_{\kappa=1}^{k}  \text{corr}^2(\b_\kappa, \hat{\b}_\kappa),
\end{equation}
where $\b_\kappa$ defines the vector of $\kappa^\text{th}$ protected variable across all $N$ training inputs. Through adversarial training, we aim to remove statistical dependence by encouraging a zero correlation between $\b_\kappa$ and $\hat{\b}_\kappa$. Note, $\BP$ deems to maximize squared correlation and $\FE$ minimizes for it. Since $\text{corr}^2$ is bounded in the range [0, 1], both minimization and maximization schemes are feasible. %Having these loss functions defined, 
Hence, the overall objective of the network is then defined as
\begin{equation}
% \begin{aligned}
%     \min_{\tfe, \tc} \max_{\tbp} &~ L_c (\X, \y; \tfe, \tc) \\
%     & - \lambda L_{bp} (\X, \b; \tfe, \tbp).
% \end{aligned} 
%\begingroup\makeatletter\def\f@size{5}\check@mathfonts
    \min_{\tfe, \tc} \max_{\tbp} ~ L_c (\X, \y; \tfe, \tc) \\
     - \lambda L_{bp} (\X, \b; \tfe, \tbp).
%\endgroup
\end{equation}
where hyperparameter $\lambda$ controls the trade-off between the two objectives. This scheme is similar to GAN \cite{goodfellow2014generative} and domain-adversarial training \cite{ganin2016domain,xie2017controllable}, in which a min-max game is defined between two networks. In our case, $\FE$ extracts features that minimize the classification criterion, while `fooling' $\BP$ (\ie, making $\BP$ incapable of predicting the protected variables). Hence, the saddle point for this objective is obtained when the parameters $\tfe$ minimize the classification loss while maximizing the loss of $\BP$. Similar to the training of GANs, in each iteration, we first back-propagate the $L_c$ loss to update $\tfe$ and $\tc$. With $\tfe$ fixed, we then minimize the $L_{bp}$ loss to update $\tbp$. Finally, with $\tbp$ fixed, we maximize the $L_{bp}$ loss to update $\tfe$. %The last step can be considered as the bias effect removal component. Furthermore, i
In this study, $L_{bp}$ depends on the correlation operation, which is a population-based operation, as opposed to individual-level error metrics such as cross-entropy or MSE losses. Therefore, we calculate the correlations over each training batch as a batch-level operation. 

\subsection{Non-linear Statistical Independence Guarantee} 
%\noindent\textbf{Theoretical Property.} 
In general, a zero-correlation or a zero-covariance only quantifies linear independence between univariate variables but cannot infer non-linear relationships in high dimension. However, we now theoretically show that, under certain assumptions on the adversarial training of $\BP$, a zero-covariance would guarantee the \textit{mean independence} \cite{wooldridge2010} between protected variables and the high dimensional features, a much stronger type of statistical independence than the linear one.

A random variable $\mathcal{B}$ is said to be \textit{mean independent} of $\mathcal{F}$ if and only if $E[\mathcal{B} |\mathcal{F}=\xi]=E[\mathcal{B}]$ for all $\xi$ with non-zero probability, where $E[\cdot]$ defines the expected value. In other words, the expected value of $\mathcal{B}$ is neither linearly nor non-linearly dependent on $\mathcal{F}$, but the variance of $\mathcal{B}$ might. The following theorem then relates the mean independence between features $\mathcal{F}$ and the protected variables $\mathcal{B}$ to the zero-covariance between $\mathcal{B}$ and the $\BP$ prediction, $\hat{\mathcal{B}}$. 

\newtheorem{theorem}{Theorem}

\noindent\textit{Property 1: $\mathcal{B}$ is mean independent of {\small $\hat{\mathcal{B}}\Rightarrow Cov(\mathcal{B},\hat{\mathcal{B}})=0$}. }

\noindent\textit{Property 2: $\mathcal{B},\mathcal{F}$ are mean independent $\Rightarrow$ $\mathcal{B}$ is mean independent of $\hat{\mathcal{B}}=\phi(\mathcal{F})$ for any mapping function $\phi$. }

\begin{theorem}
Given random variables $\mathcal{F}, \mathcal{B}, \hat{\mathcal{B}}$ with finite second moment, $\mathcal{B}$ is mean independent of $\mathcal{F}$ $\Leftrightarrow$ for any arbitrary mapping $\phi$, s.t. $\hat{\mathcal{B}}=\phi(\mathcal{F})$, $cov(\mathcal{B},\hat{\mathcal{B}})=0$
\end{theorem}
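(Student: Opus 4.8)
The plan is to establish the two directions of the biconditional separately, using the two auxiliary properties stated just above for the forward implication and reserving the genuine argument for the converse. Throughout, the finite-second-moment hypothesis is what lets me treat covariances and conditional expectations as well-defined, square-integrable objects.

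For the forward direction ($\Rightarrow$), I would simply chain the two stated properties. Assuming $\mathcal{B}$ is mean independent of $\mathcal{F}$, Property 2 gives immediately that $\mathcal{B}$ is mean independent of $\hat{\mathcal{B}}=\phi(\mathcal{F})$ for \emph{every} mapping $\phi$. Feeding this into Property 1 yields $Cov(\mathcal{B},\hat{\mathcal{B}})=0$. Since $\phi$ was arbitrary, the covariance vanishes for all admissible predictors, which is exactly the right-hand side of the equivalence.

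The converse ($\Leftarrow$) is where the real work lies, and the key observation is that the hypothesis is quantified over \emph{all} mappings $\phi$, so I am free to instantiate it at one clever choice. I would take $\phi(\mathcal{F})=E[\mathcal{B}\mid\mathcal{F}]$, which is a measurable function of $\mathcal{F}$ and hence admissible. Setting $\hat{\mathcal{B}}=E[\mathcal{B}\mid\mathcal{F}]$, the tower property collapses the cross term: since $E[\mathcal{B}\mid\mathcal{F}]$ is $\mathcal{F}$-measurable it pulls out of the inner conditioning, giving $E[\mathcal{B}\,E[\mathcal{B}\mid\mathcal{F}]]=E[(E[\mathcal{B}\mid\mathcal{F}])^2]$, and therefore
\[
Cov(\mathcal{B},\hat{\mathcal{B}})=E[(E[\mathcal{B}\mid\mathcal{F}])^2]-(E[\mathcal{B}])^2=\mathrm{Var}(E[\mathcal{B}\mid\mathcal{F}]).
\]
The hypothesis forces the left side to be zero, so $\mathrm{Var}(E[\mathcal{B}\mid\mathcal{F}])=0$. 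A square-integrable random variable of zero variance equals its own mean almost surely, and by the tower property that mean is $E[\mathcal{B}]$; hence $E[\mathcal{B}\mid\mathcal{F}]=E[\mathcal{B}]$ almost surely, which is precisely the definition of $\mathcal{B}$ being mean independent of $\mathcal{F}$.

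The main obstacle is recognizing the correct instantiation of $\phi$: the whole converse hinges on selecting the conditional-expectation map $\phi=E[\mathcal{B}\mid\,\cdot\,]$, after which the identity $Cov(\mathcal{B},E[\mathcal{B}\mid\mathcal{F}])=\mathrm{Var}(E[\mathcal{B}\mid\mathcal{F}])$ converts the vanishing-covariance hypothesis into a vanishing-variance conclusion in one stroke. The subtlety to handle with care is the measure-theoretic well-definedness of $E[\mathcal{B}\mid\mathcal{F}]$ when $\mathcal{F}$ is high-dimensional and possibly continuous, which is exactly where the finite-second-moment assumption is indispensable.
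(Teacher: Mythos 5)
Your proposal is correct and follows essentially the same route as the paper: both directions rely on the two stated properties for ($\Rightarrow$), and the converse hinges on the same instantiation $\phi(\mathcal{F})=E[\mathcal{B}\mid\mathcal{F}]$ with the identity $E\big[\mathcal{B}\,E[\mathcal{B}\mid\mathcal{F}]\big]=E\big[(E[\mathcal{B}\mid\mathcal{F}])^2\big]$. Your phrasing of the final step as $\mathrm{Var}(E[\mathcal{B}\mid\mathcal{F}])=0$ implying constancy is the same fact the paper invokes via the equality case of Jensen's inequality, so the arguments coincide.
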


%Proof of the theorem is given in Appendix A. 
\begin{proof}
The forward direction $\Rightarrow$ follows directly through \textit{Property} 1 and 2. We focus the proof on the reverse direction. Now, construct a mapping function $\hat{\mathcal{B}}=\phi(\mathcal{F})=E[\mathcal{B}|\mathcal{F}]$, \ie, $\phi(\xi)=E[\mathcal{B}|\mathcal{F}=\xi]$, then $Cov(\mathcal{B},\hat{\mathcal{B}}) = 0$ implies
\begin{equation}
    E\big[\mathcal{B} E[\mathcal{B}|\mathcal{F}]\big]=E[\mathcal{B}]E\big[E[\mathcal{B}|\mathcal{F}]\big].
    \label{eq:covariance}
\end{equation}
Due to the self-adjointness of the mapping $\mathcal{B}$ $\mapsto$ $E[\mathcal{B}|\mathcal{F}]$, the left hand side of Eq. (\ref{eq:covariance}) reads $E\big[\mathcal{B} E[\mathcal{B}|\mathcal{F}]\big]=E\big[\left(E[\mathcal{B}|\mathcal{F}]\right)^2\big]=E[\hat{\mathcal{B}}^2]$. By the law of total expectation $E\big[E[\mathcal{B}|\mathcal{F}]\big]=E[\mathcal{B}]$, the right hand side of Eq. (\ref{eq:covariance}) becomes $E[\hat{\mathcal{B}}]^2$. By Jensen's (in)equality, $E[\hat{\mathcal{B}}^2]=E[\hat{\mathcal{B}}]^2$ holds iff %and only if 
$\hat{\mathcal{B}}$ is a constant, \ie, %by definition, 
$\mathcal{B}$ is mean independent of $\mathcal{F}$. 
\end{proof}
\begin{remark}
In practice, we normalize the covariance by standard deviations of variables for optimization stability. In the unlikely singular case that $\BP$ outputs a constant, we add a small perturbation in computing the standard deviation.
\end{remark}
This theorem echoes the validity of our adversarial training strategy: $\FE$ encourages a zero-correlation between $\b_\kappa$ and $\hat{\b}_\kappa$, which enforces $\b_\kappa$ to be mean independent of $\F$ (one cannot infer the expected value of $\b_\kappa$ from $\F$). In turn, assuming $\BP$ has the capacity to approximate any arbitrary mapping function, the mean independence between features and bias would correspond to a zero-correlation between $\b_\kappa$ and $\hat{\b}_\kappa$, otherwise $\BP$ would adversarially optimize for a mapping function that increases the correlation. 

Moreover, \textbf{Theorem 1} induces that when $\b_\kappa$ is mean independent of $\F$, $\b_\kappa$ is also mean independent of $\y$ for any arbitrary classifier $\CC$, indicating that the prediction is guaranteed to be unbiased. When $\CC$ is a binary classifier and $\y \sim \mbox{Ber}(q)$, we have $p(\y=1|\b_\kappa)=E[\y|\b_\kappa]=E[\y]=p(\y=1)=q$; that is, $\y$ and $\b_\kappa$ are fully independent. 
%Since $\text{corr}^2(\cdot,\cdot)$ is bounded in the range $[0,1]$, both minimization and maximization schemes are deemed feasible. 
%%% Writing about about feasibility of correlation 

\begin{figure*}[!t]
    \centering
    \subfigure[Synthetic data image synthesis.]{ \includegraphics[width=0.44\textwidth,trim=0 -20 0 0, clip]{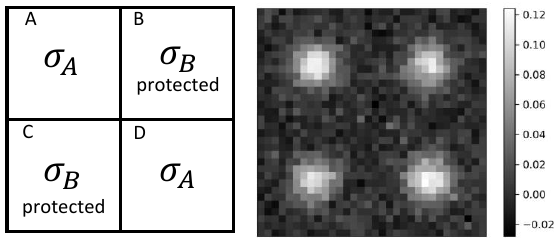}} ~~~~~
    \subfigure[Distance correlation w.r.t. $\sigma_B$]{\includegraphics[width=0.34\textwidth,trim=5 0 8 5,clip]{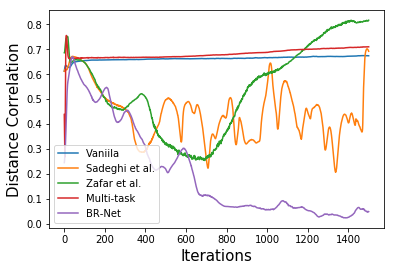}}
    %\vspace{-10pt}
    \caption{Formation of synthetic dataset (a) and comparison of results for different methods (b). }
    \label{fig:synthetic_setup}
\end{figure*}

As mentioned, when $\b$ characterizes dataset bias (Fig. \ref{fig:arch}a), there is no intrinsic link between the protected variable and the task label (\eg, in gender recognition, probability of being a female is not dependent on race), and the bias is introduced due to the data having a skewed distribution with respect to the protected variable. In this situation, we should train $\BP$ on the entire dataset to remove dependency between $\F$ and $\b$. On the other hand, when $\b$ is a task bias (Fig. \ref{fig:arch}b), it will have an intrinsic dependency with the task label (\eg, in disease classification, the disease group has a different age range than the control group), such that the task label $\y$ could potentially become a moderator \cite{baron1986moderator} that affects the strength of dependency between the features and protected variables. In this situation, the goal of fair representation learning is to remove the direct statistical dependency between $\F$ and $\b$ while tolerating the indirect association induced by the task. Therefore, our adversarial training aims to ensure mean independence between $\F$ and $\b$ conditioned on the task label $E[\F|\b,\y]=E[\F|\y]\mbox{, }E[\b|\F,\y]=E[\b|\y].$ In practice, we train the adversarial loss within one or each of the $M$ classes, depending on the specific task. This alleviates the `competing equilibrium' issue in common fair machine learning methods \cite{xie2017controllable}, where the aim is to achieve full independence w.r.t $\b$ while accurately predict $\y$, an impossible task.

%\subsection{Implementation Details} 
%\noindent\textbf{Implementation Details.}
% to ensure that there is no confounding effect in the extracted features.
%Depending on the application, we can use different architectures for each of the three subnetworks. %For instance, when the inputs are vectors of scalar variable (\eg brain regional measurements), fully connected layers can be adopted for all three subnetworks. In our experiments, 
%We use a 3D CNN \cite{esmaeilzadeh2018end,nie20163d} for $\FE$ to extract features from 3D neuroimages and use VGG16 \cite{simonyan2015very} and ResNet50 \cite{he2015deep} backbones for GS-PPB. For $\CC$ and $\BP$, we use a two-layer fully connected network. 

%For the medical experiment, as suggested in the previous work \cite{adeli2018chained,rao2017predictive}, for disease diagnosis applications, confounding effects can only be reliably estimated among healthy subjects. So, in practice we only perform the adversarial loss back-propagation step for the normal control group. 

\section{Experiments}
We evaluate our method on two different scenarios. %First, we run a synthetic experiment to verify the validity of our assumptions. Then, we apply BR-Net to predict diagnosis of HIV from brain MRIs confounded by the subjects' age. Finally, we test the model for predicting gender from face images and show how controlling for variables related to race (\eg, face color shade) can robustly enhance prediction performance. 
We compare BR-Net with several baseline approaches, and evaluate how our learned representations are invariant to the protected variables. %Additional experiments on synthetic data are given in Appendix A. 

\vspace{5pt}\noindent\textbf{Baseline Methods}. In line with the implementation of our approach, the baselines for all three experiments are 1) Vanilla: a vanilla CNN with an architecture exactly the same as BR-net without the bias prediction sub-network and hence the adversarial loss; and 2) Multi-task: a single $\FE$ followed by two separate predictors for predicting $\b_\kappa$ and $\y$, respectively \cite{lu2017fully}. The third type of approaches used for comparison are other unbiased representation learning methods. \textbf{Note that most existing works for ``fair deep learning'' are only designed for binary or categorical bias variables}. Therefore, in the synthetic and brain MRI experiments where the protected variable is continuous, we compare with two applicable scenarios originally proposed in the logistic regression setting: 1) \cite{sadeghi2019global} uses the MSE between the predicted and true bias as the adversarial loss; 2) \cite{zafar17a} aims to minimize the magnitude of correlation between $\b_\kappa$ and the logit of $\y$, which in our case is achieved by adding the correlation magnitude to the loss function. For the Gender Shades PPB experiment, the protected variable is categorical. We then further compare with \cite{kim2019}, which uses conditional entropy as the adversarial loss to minimize the mutual information between bias and features. Note, entropy-based \cite{springenberg2015unsupervised,roy2019} and mutual-information-based methods \cite{song2019,bertran19a,moyer2018} are widely used in fair representation learning to handle discrete bias.

%add two other baseline methods, one predicting both `gender' and `shade' ; and one replacing correlation loss function $L_{bp}$ with a cross-entropy loss as the `shade' variable has a ordinal but categorical value. The adversarial training then relies on maximizing the entropy of $\BP$ predictions as motivated in Categoral GAN models (`CatGAN') \cite{springenberg2015unsupervised}. These baselines show how the correlation loss plays an important role in delineating the bias and confounding effects. 
\vspace{5pt}\noindent\textbf{Metrics for Accuracy and Statistical Independence}. For the MRI and GS-PPB experiments, we measure prediction accuracy of each method by recording the balanced accuracy (bAcc), $F_1$-score, and AUC from a 5-fold cross-validation. In addition, we track the statistical dependency between the protected variable and features during the training process by applying the model to the entire dataset. We then compute the squared distance correlation ($dcor^2$) \cite{szekely2007measuring} and mutual information (MI) between the learned features at each iteration and the ground-truth protected variable. Note that the computation of $dcor^2$ and MI does not involve the bias predictor ($\BP$), thereby enabling a unified comparison between adversarial methods and the non-adversarial ones. Unlike Pearson correlation, $dcor^2=0$ or MI$=0$ imply full statistical independence with respect to the features in the high dimensional space. Lastly, the discrete protected variable in GS-PPB experiment allows us to record another independence metric called the Equality of Opportunity (EO). EO measures the average gap in true positive rates w.r.t. different values of the protected variable.

\begin{figure*}[!t]
    \centering
    % \subfigure[Image synthesis]{\includegraphics[width=0.44\textwidth]{images/icml_img/synthetic_setup_01.pdf}}
    
    \includegraphics[width=0.95\textwidth]{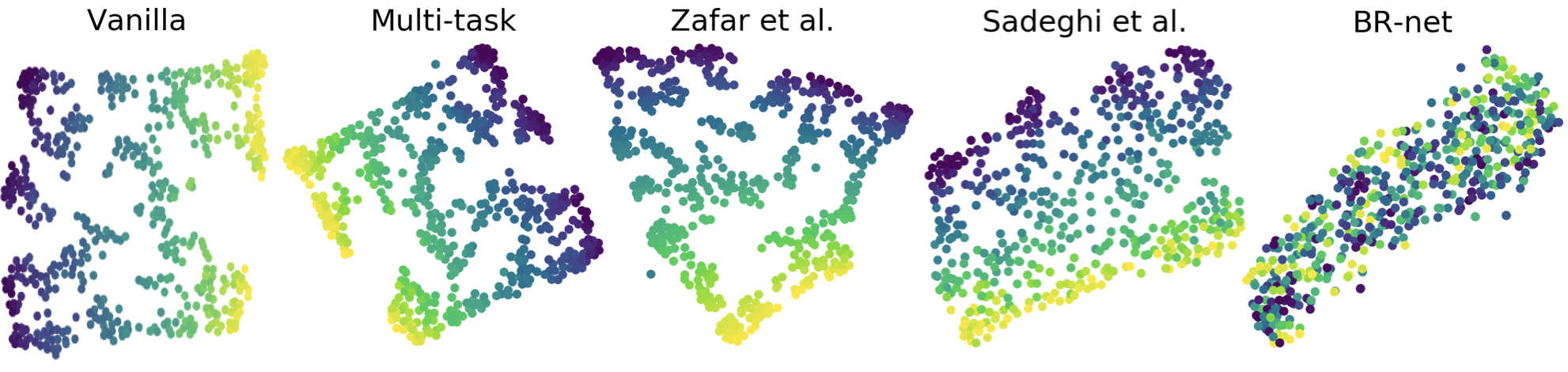}
    %\vspace{-10pt}
    \caption{tSNE projection of the learned features for different methods. Color indicates the value of $\sigma_B$.}
    \label{fig:synth}
\end{figure*}

\subsection{Synthetic Experiments} 
We generate a synthetic dataset comprised of two groups of data, each containing 512 images of resolution $32\times32$ pixels. Each image is generated by 4 Gaussians (see Fig.~\ref{fig:synthetic_setup}a), the magnitude of which is controlled by  $\sigma_A$ and $\sigma_B$. For each image from Group 1, we sample $\sigma_A$  and $\sigma_B$ from a uniform distribution $\mathcal{U}(1,4)$ while we generate images of Group 2 with stronger intensities by sampling from $\mathcal{U}(3,6)$. Gaussian noise is added to the images with standard deviation $0.01$. Now we assume the difference in $\sigma_A$ between the two groups is associated with the true discriminative cues that should be learned by a classifier, whereas $\sigma_B$ is a protected variable. In other words, an unbiased model should predict the group label purely based on the two diagonal Gaussians and not dependent on the two off-diagonal ones. To show that the BR-Net can result in such models by controlling for $\sigma_B$, we train it on the whole dataset of 1,024 images with binary labels and $\sigma_B$ values.

For simplicity, we construct $\FE$ with 3 stacks of $2\times2$ convolution/ReLU/max-pooling layers to produce 32 features. Both the $\BP$ and $\CC$ networks have one hidden layer of dimension 16 with $tanh$ as the non-linear activation function. After training, the vanilla and multi-task models achieve close to 95\% training accuracy, and the other 3 methods  close to 90\%. Note that the theoretically maximum training accuracy is $90\%$ due to the overlapping sampling range of $\sigma_A$ between the two groups, indicating that the vanilla and multi-task models additionally rely on the protected variable $\sigma_B$ for predicting the group label, an undesired behavior. Further, Fig. \ref{fig:synthetic_setup}b shows that our method can optimally remove the statistical association w.r.t. $\sigma_B$ as $dcor^2$ drops dramatically with training iterations. The MSE-based adversarial loss yields unstable $dcor^2$ measures, and \cite{zafar17a} suboptimally removes the bias in the features (green curve Fig. \ref{fig:synthetic_setup}b). Finally, the above results are further supported by the 2D t-SNE \cite{maaten2008visualizing} projection of the learned features as shown in Fig.~\ref{fig:synth}. BR-net results in a feature space with no apparent bias, whereas features derived by other methods form a clear correlation with $\sigma_B$. This confirms the unbiased representation learned by BR-Net.

\subsection{HIV Diagnosis Based on MRIs}

\begin{figure}[!t]%{.48\textwidth}
    \centering
    \includegraphics[width=0.9\linewidth,trim=0 0 750 0,clip]{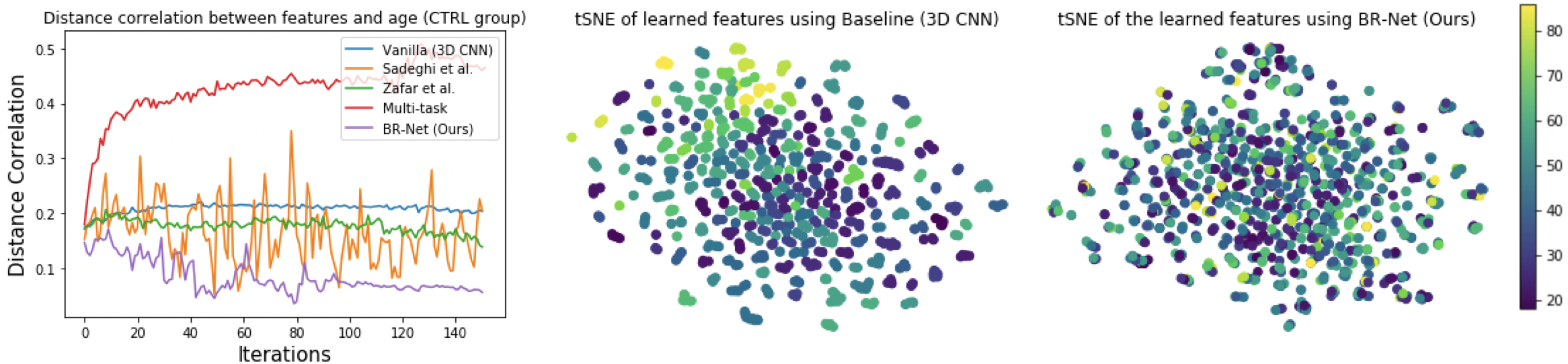}
    \caption{Statistical dependence between the learned features and age for the CTRL cohort in the HIV experiment, which is quantitatively measured by $dcor^2$.}
    \label{fig:hivcorr}
\end{figure}

\begin{table*}
\begin{minipage}{0.4\linewidth}
%\begin{table}[!b]%{.48\textwidth}
    \centering
    \captionof{table}{Classification accuracy of HIV diagnosis prediction and statistical dependency of learned features w.r.t. age. Best result in each column is typeset in bold and the second best is underlined. }
    {\small
    \setlength{\tabcolsep}{2.8pt}
    \begin{tabular}{l|ccc|cc}%C{1.1cm}C{1.1cm}C{1.1cm}}
    \toprule
       \textbf{Method}  &  \textbf{bAcc} & $\textbf{F}_1$ & \textbf{AUC} & $\textbf{dcor}^2$ & \textbf{MI}\\
       \hline
       % Resid+SVM  & 69.5 & 0.65 & 71.2 & NA & NA \\
       Vanilla & 71.6 & 0.68 & 80.8 & 0.21 & 0.07 \\
       Multi-task & \textbf{74.2} & \underline{0.66} & \textbf{82.5} & 0.47 & 1.31 \\
       Sadeghi \etal~\cite{sadeghi2019global} & 64.8 & 0.58 & 75.2 & 0.22 &  0.06\\
       Zafar \etal~\cite{zafar17a} & \underline{73.2} & 0.65 & 80.8 & \underline{0.15} & \underline{0.04} \\
       BR-Net (Ours)&  \textbf{74.2} & \textbf{0.74} & \underline{80.9} & \textbf{0.05} & \textbf{7e-4}\\
       \bottomrule
    \end{tabular}
    }
    \label{tab:results}
    %\centering
%\end{table}
\end{minipage}~
\begin{minipage}{0.59\linewidth}
    \centering
    \begin{tikzpicture}
        \draw (0, 0) node[inner sep=0] {
        \includegraphics[width=0.5\linewidth,trim=26 0 40 20,clip]{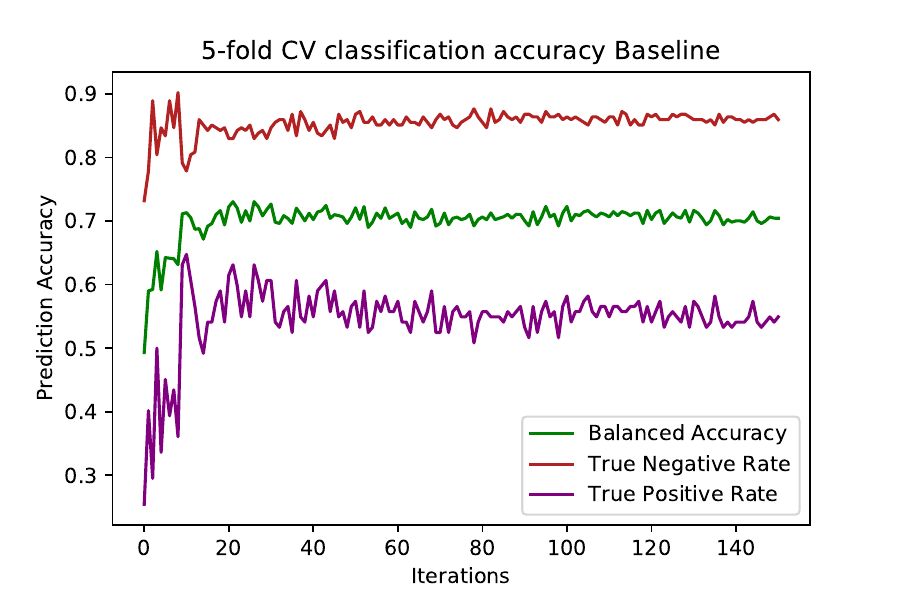}
        \includegraphics[width=0.5\linewidth,trim=26 0 40 20,clip]{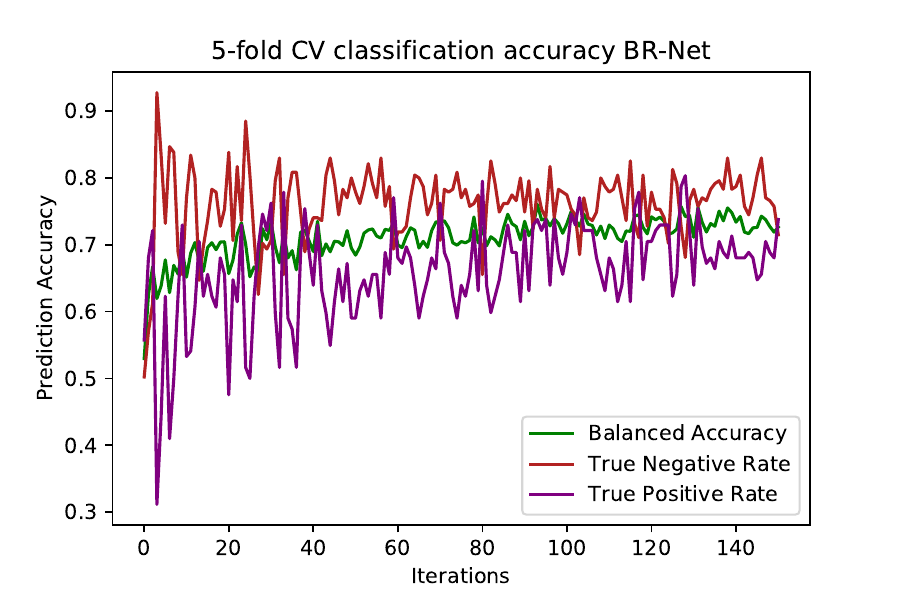}
        };
        \draw (-2.5, -.8) node {\scriptsize \textbf{(a)}};
        \draw (2.5, -.8) node {\scriptsize \textbf{(b)}};
        % \draw (1.8, -1.1) node {\scriptsize \textbf{(c)}};
    \end{tikzpicture}
    % \includegraphics[width=0.44\linewidth,trim=0 0 50 0,clip]{images/tsne_baseline.png}
    % \includegraphics[width=0.52\linewidth]{images/tsne_ours.png}
    %\vspace{-23pt}
    \vspace{-10pt}\captionof{figure}{Accuracy, TNR, and TPR of the HIV experiment, as a function of the \# of iterations for (a) 3D CNN baseline, (b) BR-Net. Our method is robust against the imbalanced age distribution between HIV and CTRL.}
    \label{fig:hiv_accplot}
\end{minipage}
\end{table*}

% \begin{table*}[t]
% \begin{minipage}{.48\textwidth}
% \end{minipage}
% \begin{minipage}{.48\textwidth}
% \end{minipage}
% \end{table*}
% \begin{figure}[t]
%     \centering
%     \begin{tikzpicture}
%         \draw (0, 0) node[inner sep=0] {
%         \includegraphics[width=0.3\linewidth,trim=26 0 40 20,clip]{images/hiv_baseline_acc.pdf}
%         \includegraphics[width=0.3\linewidth,trim=26 0 40 20,clip]{images/hiv_brnet_acc.pdf}
%         };
%         \draw (-2.1, -.8) node {\scriptsize \textbf{(a)}};
%         \draw (2.2, -.8) node {\scriptsize \textbf{(b)}};
%         % \draw (1.8, -1.1) node {\scriptsize \textbf{(c)}};
%     \end{tikzpicture}
%     % \includegraphics[width=0.44\linewidth,trim=0 0 50 0,clip]{images/tsne_baseline.png}
%     % \includegraphics[width=0.52\linewidth]{images/tsne_ours.png}
%     %\vspace{-23pt}
%     \vspace{-5pt}\caption{Accuracy, true negative, and true positive rates of the HIV experiment, as a function of the \# of iterations for (a) vanilla 3D CNN baseline, (b) BR-Net. The results show that our method is robust against the imbalanced age distribution between HIV and CTRL subjects and achieves balanced prediction for both cohorts.}
%     \label{fig:hiv_accplot}
% \end{figure}

% Moved to matterials
Neuroimaging studies increasingly rely on machine learning models to identify differences in brain images between cohorts. Our first task aims at classifying HIV patients \vs~control subjects (CTRL) based on brain MRIs to help understanding the impact of HIV on the brain. The study cohort includes 223 CTRLs and 122 HIV patients who are seropositive for the HIV-infection with CD4 count $>100~\frac{\text{cells}}{\mu L}$ (average: 303.0). % \cite{adeli2018chained}. 
Since the HIV subjects are significantly older in age than the CTRLs (CTRL: $45\pm17$, HIV: $51\pm8.3$, $p<.001$) in this study, normal aging becomes a potential task bias; prediction of diagnosis labels may be dependent on subjects' age instead of true HIV markers. 

The T1-weighted MRIs are all skull stripped, affinely registered to a common template, and resized into a $64\times64\times64$ volume. For each run of the 5-fold cross-validation, the training folds are augmented by random shifting (within one-voxel distance), rotation (within one degree) in all 3 directions, and left-right flipping based on the assumption that HIV infection affects the brain bilaterally \cite{adeli2018chained}. The data augmentation results in a balanced training set of 1024 CTRLs and 1024 HIVs. As the flipping removes left-right orientation, the ConvNet is built on half of the 3D volume containing one hemisphere. The representation extractor $\FE$ has 4 stacks of $2\times2\times2$ 3D convolution/ReLu/batch-normalization/max-pooling layers yielding 4096 intermediate features. Both $\BP$ and $\CC$ have one hidden layer of dimension 128 with $tanh$ as the activation function. As discussed, the task bias should be handled within individual groups rather the whole dataset. Motivated by recent medical studies \cite{rao2017predictive,adeli2018chained}, we perform the adversarial training with respect to the protected variable of age only on the CTRLs because HIV subjects may exhibit irregular aging. Extension analysis of this conditional modeling applied to medical applications can be found at \cite{zhaoadeli2020cf-net}.

%This experiment resembles a task bias (as discussed earlier). In order to remove the induced bias, we take an approach similar to other medical studies \cite{rao2017predictive,adeli2018chained}. Bias effects can only be reliably estimated among healthy subjects. Assuming `age' as the protected variable, regular aging only happens in CTRL subjects (HIV patients may have accelerated aging due to the disease). To properly remove the effect of age, $\FE$ should be impartial to regular aging. So, in practice we only perform the adversarial loss back-propagation step for the CTRL group. 

Table \ref{tab:results} shows the diagnosis prediction accuracy of BR-Net in comparison with baseline methods. 
BR-Net results in the most accurate prediction in terms of balanced accuracy (bAcc) and $F_1$-score, while it also learns the least biased features in terms of $dcor^2$ and MI. Although prior works \cite{xie2017controllable} suggested that improving fairness of the model may reduce prediction accuracy due to the ``competing equilibrium'' between $\CC$ and $\BP$, our results on this relatively small dataset indicate that naive classifiers may easily overfit to the aging (bias) effect and therefore result in lower accuracy in HIV classification. On the other hand, BR-Net alleviates the ``competing equilibrium'' issue in the task bias by pursuing conditional independence (CTRL) between features and age. While the multi-task model produces a higher AUC, it is also the most biased model as it simultaneously leverages both age and HIV cues for prediction. This result is also supported by Fig.~\ref{fig:hivcorr}, where the distance correlation for BR-Net decreases with the adversarial training and increases for Multi-task. The MSE-based adversarial loss \cite{sadeghi2019global} yields unstable $dcor^2$ measures potentially due to the ill-posed optimization of maximizing $\ell_2$ distance. Moreover, minimizing the statistical association between the bias and predicted label $\y$ \cite{zafar17a} does not necessarily lead to unbiased features (green curve Fig. \ref{fig:hivcorr}).
%Similar to the synthetic experiment, adversarial training based on MSE-loss produces inconsistent and unreliable associations with age. 
%The t-SNE  projections of the learned feature spaces are visualized in Fig. \ref{fig:hivcorr}. %Each point represents a subject with the age color-coded on it.
%The feature space learned by the vanilla model %(Fig. \ref{fig:hiv_tsne}a)
%forms a clear association with age, as older subjects are concentrated on the top left region of the space. This again suggests predictions from the baseline may be dependent on age rather than true HIV markers. Whereas, our method results in a space with no apparent bias to age. 
In addition, we record the true positive and true negative rate of BR-net for each training iteration. As shown in Fig.~\ref{fig:hiv_accplot}, the baseline tends to predict most subjects as CTRLs (high true negative rate). This is potentially caused by the CTRL group having a wider age distribution, so an age-dependent predictor would bias the prediction towards CTRL. When controlling for age, BR-Net reliably results in balanced true positive and true negative rates.

%\begin{figure}[t]
%    \centering
%    \begin{tikzpicture}
%        \draw (0, 0) node[inner sep=0] {
%        \includegraphics[width=0.33\linewidth,trim=40 20 30 20,clip]{images/gs_baseline_tsne.pdf}
        %\includegraphics[width=0.25\linewidth,trim=40 20 20 20,clip]{images/gs_catgan_tsne.pdf}
%        \includegraphics[width=0.33\linewidth,trim=40 20 30 20,clip]{images/gs_multitask_tsne.pdf}
%        \includegraphics[width=0.33\linewidth,trim=40 20 30 20,clip]{images/gs_brnet_tsne.pdf}
%        };
%        \draw (-3, -1.2) node {\scriptsize \textbf{(a)}};
%        \draw (0, -1.2) node {\scriptsize \textbf{(b)}};
%        \draw (2.5, -1.2) node {\scriptsize \textbf{(c)}};
        % \draw (1.8, -1.1) node {\scriptsize \textbf{(c)}};
%    \end{tikzpicture}
    % \includegraphics[width=0.44\linewidth,trim=0 0 50 0,clip]{images/tsne_baseline.png}
    % \includegraphics[width=0.52\linewidth]{images/tsne_ours.png}
%    \vspace{-20pt}
%    \caption{tSNE visualization of the learned features by (a) the VGG16 baseline, (b) a multi-task baseline, and (c) our BR-Net. Each point shows an image in the dataset color-coded with `shade'.}
%    \label{fig:tsne_gs}
%\end{figure}
\begin{table*}[!t]
    \centering
    \caption{Average results over five runs of 5-fold cross-validation (accuracy and statistical independence metrics) on GS-PPB. Best results are typeset in bold and second best are underlined.} %\vspace{-5pt}
    { \small
    \setlength{\tabcolsep}{3pt}
    \begin{tabular}{l cccccc c cccccc}
    \toprule
        ~ & \multicolumn{6}{c}{\textbf{VGG16 Backbone}} &~& \multicolumn{6}{c}{\textbf{ResNet50 Backbone}} \\
        \cline{2-7} \cline{9-14} 
        \textbf{Method} & \textbf{bAcc (\%)} & $\textbf{F}_1$  \textbf{(\%)}& \textbf{AUC (\%)} & $dcor^2$ & \textbf{MI} & \textbf{EO\%} &~& \textbf{bAcc (\%)} & $\textbf{F}_1$  \textbf{(\%)}& \textbf{AUC (\%)} & $dcor^2$ & \textbf{MI} & \textbf{EO\%}\\
        \hline
        Vanilla & 94.1$\pm$0.2 & 93.5$\pm$0.3 & 98.9$\pm$0.1 & 0.17 & 0.40 & 4.29&~&  90.7$\pm$0.7 & 89.8$\pm$0.7 & 97.8$\pm$0.1 & 0.29 & 0.60 & 11.2 \\
        {\scriptsize Kim \etal~\cite{kim2019}} & \underline{95.8$\pm$0.5} & \underline{95.7$\pm$0.5} & \underline{99.2$\pm$0.2} & 0.32  & \underline{0.28} & 4.12 &~& 91.4$\pm$0.9 & 91.0$\pm$0.9 & 96.6$\pm$0.7 & \textbf{0.18}&\underline{0.55} & \underline{3.86}\\
        {\scriptsize Zafar \etal~\cite{zafar17a}} &  94.3$\pm$0.4  &  93.7$\pm$0.5  & 99.0$\pm$0.1 & \underline{0.19} & 0.43 & \underline{4.11} &~& \textbf{94.2$\pm$0.4} & \textbf{93.6$\pm$0.4} & \textbf{98.7$\pm$0.1}& 0.29 & 0.60 & 4.68 \\
        Multi-Task & 94.0$\pm$0.3 & 93.4$\pm$0.3 & 98.9$\pm$0.1 & 0.28 & 0.42 & 4.45 &~& 94.0$\pm$0.3 & \underline{93.4$\pm$0.3} & \underline{98.6$\pm$0.3}  &0.29 & 0.63& 4.15\\
        BR-Net & \textbf{96.3$\pm$0.6} & \textbf{96.0$\pm$0.7} & \textbf{99.4$\pm$0.2} & \textbf{0.12} & \textbf{0.13} & \textbf{2.02} &~& \underline{94.1}$\pm$0.2 &  \textbf{93.6$\pm$0.2} & \underline{98.6$\pm$0.1} & \underline{0.23}& \textbf{0.49} & \textbf{2.87}\\
    \bottomrule
    \end{tabular}
    }
    \label{tab:gs_results}
\end{table*}

\begin{figure*}[!t]
    \centering
    \includegraphics[width=0.98\textwidth]{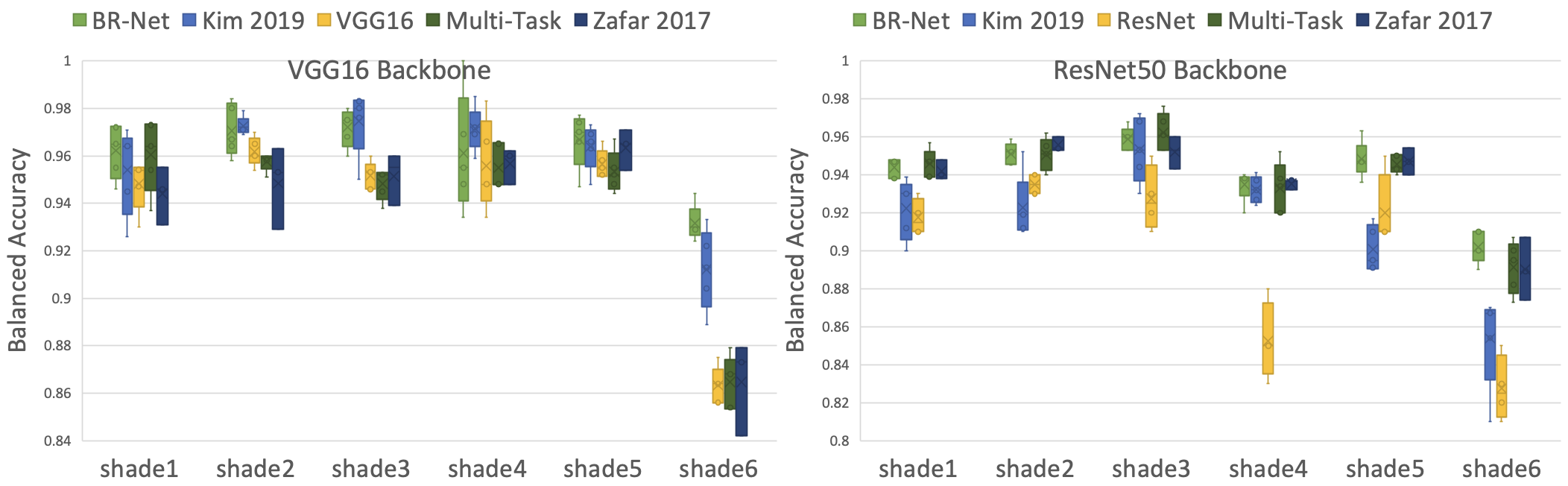}
    %\vspace{-10pt}
    \caption{Accuracy of gender prediction from face images across all shades (1 to 6) of the GS-PPB dataset with two backbones, (left) VGG16 and (right) ResNet50. BR-Net consistently results in more accurate predictions in all 6 shade categories.}
    \label{fig:gs_diagram}
\end{figure*}
\begin{figure*}[!t]
    \centering
    \includegraphics[width=\textwidth]{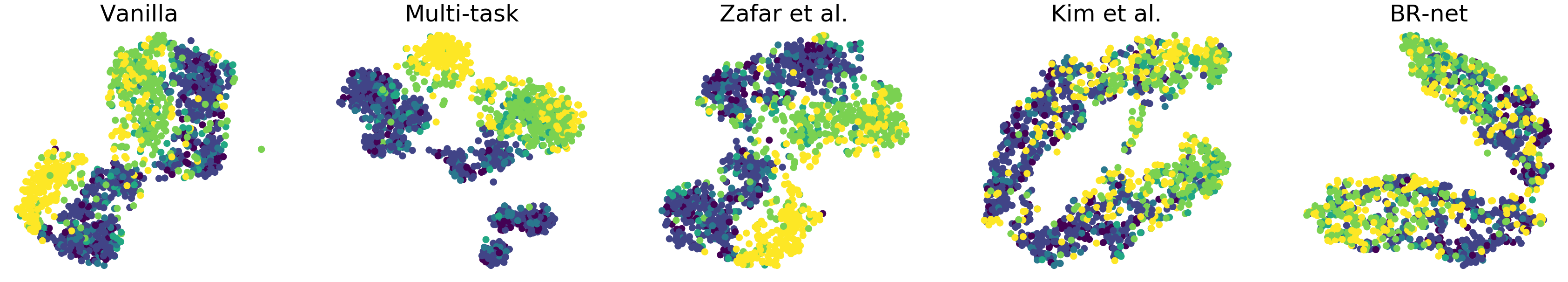}
    %\vspace{-5pt}
    \caption{Learned representations by different methods. Color encodes the 6 categories of skin shade.}
    \label{fig:tsne_gs}
\end{figure*}

\subsection{Gender Prediction Using the GS-PPB Dataset}

The last experiment is on gender prediction from face images in the Gender Shades  Pilot Parliaments Benchmark (GS-PPB) dataset \cite{buolamwini2018gender}. This dataset contains 1,253 facial images of 561 female and 692 male subjects. The face shade is quantified by the Fitzpatrick six-point labeling system and is categorised from type 1 (lighter) to type 6 (darker). This quantization was used by dermatologists for skin classification and determining risk for skin cancer \cite{buolamwini2018gender}. To ensure prediction is purely based on facial areas, we first perform face detection and crop the images \cite{Adam2018}. %\footnote{\scriptsize \url{https://pypi.org/project/face_recognition/}}.
To train our models on this dataset, we use backbones VGG16 \cite{simonyan2015very} and ResNet50 \cite{he2015deep}  pre-trained on ImageNet \cite{deng2009imagenet}. We fine-tune each model on GS-PPB dataset to predict the gender of subjects based on their face images using fair 5-fold cross-validation. The ImageNet dataset for pre-training the models has fewer cases of humans with darker faces \cite{yang2019towards}, and hence the resulting models have an underlying dataset bias to the shade.

BR-Net counts the variable `shade' as an ordinal and categorical protected variable. As discussed earlier, besides the baseline models in the HIV experiment, we additionally compare with a fair representation learning method, \cite{kim2019}, based on mutual information minimization. Note that this method is designed to handle discrete protected variables, therefore not applicable in previous experiments. We exclude \cite{sadeghi2019global} as the adversarial MSE-loss results in large oscillation in prediction results. Table \ref{tab:gs_results} shows the prediction results across five runs of 5-fold cross-validation and the independence metrics derived by training on the entire dataset. Fig.~\ref{fig:gs_diagram} plots the accuracy for each individual `shade' category. In terms of bAcc, BR-Net results in more accurate gender prediction than all baseline methods except that it it slightly worse than \cite{zafar17a} with ResNet50 backbone. However, features learned by \cite{zafar17a} are more biased towards skin shade. In most cases our method produces less biased features than \cite{kim2019}, a method designed to explicitly optimize full statistical independence between variables. In practice, removing mean dependency by adversarial training is potentially a better surrogate for removing statistical dependency between high-dimensional features and bias.

BR-Net produces similar accuracy across all `shade' categories. Prediction made by other methods, however, is more dependent on the protected variable by showing inconsistent recognition capabilities for different `shade' categories and failing significantly on darker faces. This bias is confirmed by the t-SNE projection of the feature spaces (see Fig.~\ref{fig:tsne_gs}) learned by the baseline methods; they all form clearer association with the bias variable than BR-Net. To gain more insight, we visualize the saliency maps derived for the baseline and BR-Net. For this purpose, we use a similar technique as in \cite{simonyan2014deep} to extract the pixels in the original image space highlighting the areas that are discriminative for the gender labels. Generating such saliency maps for all inputs, we visualize the average map for each individual `shade' category (Fig.~\ref{fig:gs_viz}). The value on each pixel corresponds to the attention from the network to that pixel within the classification process. Compared to the baseline, BR-Net focuses more on specific face regions and results in more stable patterns across all `shade' categories.

\section{Conclusion} %\vspace{-5pt}
Machine learning models are acceding to everyday lives from policy making to crucial medical applications. Failure to account for the underlying bias in datasets and tasks can lead to spurious associations and erroneous decisions. We proposed a method based on adversarial training strategies by encouraging vanished correlation to learn features for the prediction task while being unbiased to the protected variables in the study. We evaluated our bias-resilient neural network (BR-Net) on synthetic, medical diagnosis, and gender classification datasets. In all experiments, BR-Net resulted in representations that were invariant to the protected variable while obtaining comparable (and sometime better) classification accuracy. %Based on our experiments we can conclude that, besides the attempt to improve datasets and curate unbiased ones \cite{yang2019towards}, it is  crucial to build models that properly account for the bias in data during training. Our bias-resilient model and some other recent works set on foot toward this direction. 
%As a direction for the future work, other strategies such as deep canonical correlation analysis \cite{andrew2013deep} can be explored to form the adversarial component. 

\noindent\textbf{Acknowledgement.} This work was supported in part by NIH Grants AA017347, AA010723, MH113406, and AA021697, and by Stanford HAI AWS Cloud Credit.

{\small
\bibliographystyle{ieee_fullname}
\bibliography{References}
}

\end{document}